\documentclass[11pt]{article}

\usepackage[letterpaper, left=1in, right=1in, top=1in,bottom=1in]{geometry}

\usepackage{parskip}
\usepackage{amsmath}
\usepackage{amsthm}

\usepackage{booktabs} 

\usepackage[utf8]{inputenc}

\usepackage{geometry}

\usepackage{graphpap,amscd,mathrsfs,graphicx,lscape,enumitem,dsfont,bm,url,subfigure}
\usepackage{epsfig,amstext,xspace}
\usepackage{algorithm,comment}
\usepackage{algorithmic}


\usepackage{paralist}

\usepackage{algorithm,algorithmic}
\usepackage[utf8]{inputenc} 
\usepackage[T1]{fontenc}    
\usepackage{hyperref}       
\usepackage{url}            
\usepackage{booktabs}       
\usepackage{amsfonts}       
\usepackage{nicefrac}       
\usepackage{microtype}      
\usepackage{graphicx} 
\usepackage{pgfplots}
\usepackage{tikz}
\usetikzlibrary{patterns}
\usepackage[font=footnotesize,labelfont=bf]{caption}

\newcommand{\CB}{\textsc{CB}}

\newcommand{\Reg}{\mathsf{Reg}}

\newcommand{\Exp}{\mathbf{E}}

\newtheorem{theorem}{Theorem}[section]
\newtheorem{remark}[theorem]{Remark}
\newtheorem{lemma}[theorem]{Lemma}
  \newtheorem{corollary}[theorem]{Corollary}

\usepackage{color}              
\usepackage[suppress]{color-edits}

\addauthor{tl}{cyan}
\addauthor{rpl}{blue}
\addauthor{vm}{green}

\begin{document}
\title{Bandits with adversarial scaling}

 \author{
 Thodoris Lykouris\thanks{Microsoft Research NYC, \texttt{thlykour@microsoft.com}. Work conducted in part while author was a Ph.D. student at Cornell University and supported by a Google Ph.D. Fellowship, and NSF grants CCF-1563714, and CCF-1408673.}
 \and  Vahab Mirrokni\thanks{Google Research NYC, \texttt{mirrokni@google.com}.}
 \and Renato Paes Leme \thanks{Google Research NYC, \texttt{renatoppl@google.com}.}
 }
\date{}
\maketitle
\begin{abstract}
We study \emph{adversarial scaling}, a multi-armed bandit model where rewards have a stochastic and an adversarial component. Our model captures display advertising where the \emph{click-through-rate} can be decomposed to a (fixed across time) arm-quality component and a  non-stochastic user-relevance component (fixed across arms). Despite the relative stochasticity of our model, we demonstrate two settings where most bandit algorithms suffer. On the positive side, we show that two algorithms, one from the  action elimination and one from the mirror descent family are adaptive enough to be robust to adversarial scaling. Our results shed light on the robustness of adaptive parameter selection in stochastic bandits, which may be of independent interest.
\end{abstract}

\section{Introduction}\label{sec:intro}
The multi-armed bandit setting is the cleanest paradigm to capture the tension between exploring information about the underlying system and exploiting the most profitable actions based on the current information. A decision-maker (or \emph{learner}) repeatedly selects among a set of $k$ actions also referred to as arms, earns the reward of the selected arm, and obtains feedback only about it. This creates a direct trade-off between learning about the performance of underexplored actions and earning the reward from those that seem the most profitable. This trade-off is prominent in applications such as display advertising where a platform needs to repeatedly select which ad to show in response to a particular pageview. The arms therefore correspond to the competing ads and, upon being shown, each ad can result in a click or not. We use this as a running example and assume that the goal of the learner is to maximize the total number of clicks (in practice, there are other goals such as revenue and user-experience, but we ignore those for simplicity).

Modeling this problem, one soon realizes that the two classical multi-armed bandit approaches fail to capture the essence of the setting. The two main approaches in the multi-armed bandit learning literature assume that the rewards obtained from each pageview are either completely adversarial or coming from identical and independent distributions (i.i.d.). However, clearly some ads have better quality and are consistently more
clickable than others, so assuming a fully adversarial model seems to ignore a lot of useful structure in the data. On the other extreme, stochastic models
assume that each ad has a fixed probability of being clicked whenever displayed (this is typically referred to as \emph{click-through-rate}). In practice, click-through-rates are known to vary due to various factors: time of the day, day of the week, seasonalities (e.g. users tend to click more before Christmas and on Black Friday) but those factors affect ads uniformly. Trac\`a and Rudin \cite{traca2015regulating} propose a model where the mean $\mu^t(a)$ reward of an arm $a$ at time $t$ is a product $$\mu^t(a) = q^t\cdot\theta(a) $$
of the intrinsic quality of the arm and a seasonality term $q^t$. In the display ads example, $q^t$ is the clickiness of the user behind her page-view and $\theta(a)$ is the intrinsic quality of the ad. The motivation of \cite{traca2015regulating} comes from retail: $\theta(a)$ is the effect certain action has in the store (the product price, which items are on sale...) and $q^t$ is the number of customers in the store in that day. In their model, the seasonality effect $q^t$ is known to the algorithm.

We depart from their model by assuming we do not have access to $q^t$. In display ads, the clickiness of users can be a function of a large number of covariates and it affects users in complex ways: certain users click more during the evening while other users click more during the day. Clickness of users in China is affected by Chinese New Year while clickiness of users in the US is affected by Thanksgiving. Estimating all those patterns is costly so it is useful to have models that exploit the structure of $\mu^t(a)$ without having to estimate the seasonality/clickness model directly.

To obtain more robust algorithms we assume $q^t$ is adversarially chosen and hence the name \emph{adversarial scaling}. If we think of $q^t$ as a seasonality effect, there is not an adversary per se in the motivation, but rather a very complex pattern which we choose not to assume anything about to get robust algorithms. However, $q^t$ can also model some actually adversarial effects. For example, an advertiser may create a botnet that does not click anything in the initial time, i.e. $q^t=0$, aiming to prolong the algorithm's exploration and hence her number of displays. This attack is very simple to run as it does not actively do anything and causes a \emph{cold start} to the algorithm as the latter needs to appropriately disregard these insignificant samples. 

As we demonstrate in this paper, despite the inherent stochasticity in these settings (best arm is better at each single round), we show that most stochastic algorithms perform poorly. This naturally generates the following question:
\begin{center}{\emph{What makes bandit algorithms robust to adversarial scaling attacks?}}
\end{center}

\paragraph{Our contribution.}
Tackling this question, we demonstrate that adaptive parameter selection is essential for robustness to adversarial scaling. In particular, we show a modification of the classical Action Elimination algorithm \cite{Even-DarManMan06} which we term \emph{AAEAS} that uses the reward of the algorithm as a proxy for the number of rounds that matter, i.e. the total scaling $\sum_tq^t$. This allows it to adapt to the intrinsic qualities $\theta(a)$ without being misled by the adversarial scaling in a way similar to the \emph{self-confident learning} technique developed for small-loss bounds in adversarial bandits. The resulting guarantees hold with high probability which is an advantage of this method. 

Aiming to understand whether this robustness is satisfied more broadly, we then focus on the \emph{Online Mirror Descent with Log-Barrier} algorithm of Foster, Li, Lykouris, Sridharan, and Tardos \cite{FosterLiLySrTa16}. Interestingly, when combined with the neat doubling trick of Wei and Luo \cite{wei2018more}, this algorithm (then termed \emph{BROAD}) seamlessly adapts to the total scaling. Its resulting guarantees hold only for the weaker pseudo-regret notion but are stronger by a factor of $k$ and the empirical performance is also enhanced.

To complement our study, we show two very simple attacks manage to make, to the best of our knowledge, all other stochastic algorithms perform ineffectively (detailed comparison in Section~\ref{sec:empirical}). Surprisingly these attacks work even against the recent breakthrough, Online Mirror Descent with Tsallis entropy \cite{ZimmertSeldin19a_best_of_both}. The latter has optimal regret guarantee for both stochastic, adversarial setting as well as regimes in between and is therefore considered as the best algorithm for such intermediate settings.  

The first attack is a purely stochastic setting where all the arms have fixed but really small means $\mu(a)$ where $\mu(a^{\star})$ is the mean of the best action. This can be viewed as adversarial scaling with $\theta(a)=\frac{\mu(a)}{\mu(a^{\star})}$ and $q^t=\mu(a^{\star})$. The bounds for these algorithms scales inversely to the absolute difference $\mu(a^{\star})-\mu(a)$. In contrast, AAEAS and BROAD have boundes based on the normalized difference $1-\frac{\mu(a)}{\mu(a^{\star})}$ which provides great improvement when the means are small as typical in click-through-rates for display advertising.

The second attack is a \emph{cold start} attack where in some initial period, the $q^t=0$; this can arise because either maliciously or organically as we discuss in the end of Section~\ref{ssec:cold_start}. The number of rounds that have passed becomes a rather irrelevant quantity in this case as what really matters is the $\sum_tq^t$. However, all algorithms other than AAEAS and BROAD use this number of rounds (many of them in ways that seem not fixable). The cold start therefore leads them to a really bad prior state and they take a very long time to recover. Surprisingly, the most effective stochastic algorithm Thompson Sampling \cite{TSAgrawal} is prone to this attack even when the cold start is really short (see Figure~\ref{fig:ts_comparison}).

\paragraph{Related work.} Our paper lies in a broader line of work that tries to achieve enhanced guarantees when data exhibit a particular benign structure. The main structure that has been utilized is the data being i.i.d. across time where many algorithms achieve logarithmic guarantees at the existence of a large gap in $\mu(a^{\star})-\mu(a)$. 
Under this structure, three previous lines of work study robustness of bandit learning to adversarial components, orthogonal to adversarial scaling. We briefly discuss them below and more elaborately review them in Section~\ref{sec:empirical}.

\begin{compactitem}
    \item Works on \emph{best of both worlds} focus on the design of algorithms that achieve improved guarantees when the input is stochastic while retaining the regret guarantees of adversarial algorithms when this is not the case. This was introduced by Bubeck and Slivkins \cite{BubeckS12} and was further studied in a sequence of works \cite{DBLP:conf/icml/SeldinS14,auer16,DBLP:conf/colt/SeldinL17,wei2018more,ZimmertSeldin19a_best_of_both,ZimmertLuoWei19}. Interestingly, our work shows that the BROAD algorithm \cite{wei2018more} is robust to adversarial scaling while Online Mirror Descent with Tsallis entropy \cite{ZimmertSeldin19a_best_of_both} that is considered superior fails at the presence of adversarial scaling. Our first algorithm can also be transformed to achieve such a best of both worlds guarantee if combined with the techniques of \cite{BubeckS12}.
    \item Works on \emph{bandits with adversarial corruptions} allow an adversary to corrupt some rounds. The adversary knows the distribution of the algorithm but \emph{not} the randomness in the arm selection (at the extreme, this can capture the adversarial bandit problem). The goal is to a) achieve the improved guarantee when there is no corruption, b) have the performance gracefully degrade with the amount of corruption, while c) being agnostic to the amount of corruption. This was initially proposed by Lykouris, Mirrokni, and Paes Leme \cite{lykouris2018stochastic} who provided regret high-probability guarantees based on a multi-layering extension of active arm elimination. The guarantees were further improved by Gupta, Koren, and Talwar \cite{gupta2019better} while further improvements were achieved for the weaker pseudo-regret notion \cite{ZimmertSeldin19a_best_of_both}. Our first algorithm can be made robust to adversarial corruptions if combined with the techniques of \cite{lykouris2018stochastic} while the second algorithm is also robust to adversarial corruptions for the notion of pseudo-regret.
    \item Last, a line of work aims to design attacks to stochastic algorithms when the adversary knows the random arm selection of the algorithm at every round and not only the arm distribution \cite{jun2018adversarial,LiuShroff19}. These works aim to minimize the amount of manipulation in the rewards needed for different objectives of the adversary and only provide attacks (instead of robust algorithms). Our algorithms obtain linear regret against such attacks (the same happens with all no-regret algorithms as these attacks give more power to the adversary than the adversarial bandit model).
\end{compactitem}

Beyond stochasticity, many works exploit other benign properties in the data to enhance adversarial guarantees. Example properties include
 small variance of the losses
\cite{Hazan:2009:BAB:1496770.1496775,wei2018more}, small effective loss range
\cite{cesa2018bandit}, small variation in the losses across rounds \cite{bubeck2019improved}, small loss of the best arm \cite{AllenbergAuGyOt06, Neu2015first, FosterLiLySrTa16, LykourisSrTa18, AllenzhuBubLi_myma},
second-order excess loss \cite{wei2018more}, locally perturbed adversarial inputs \cite{DBLP:conf/icml/ShamirS17} among others.

Finally, our model can be cast as a rank-1 assumption on the structure of the
reward function. Rank-1 assumptions have been explored in various ways in the
bandits literature, for example the rank-1 bandit model
\cite{katariya2016stochastic} and factored bandits \cite{zimmert2018factored}
assume that the action space have a cartesian product and hence the reward
in each period can be represented as a matrix over which the rank-1 assumption is
made. In other words, this rank-1 structure is on the space of actions unlike
our model which is in the space of actions cross time. Another common setting where rank-1 assumptions ara made is in
semi-parametric bandits \cite{krishnamurthy2018semiparametric} where the loss
function is a product of unknown parameters and features. A difference with
respect to our model is that the features are observed by the learner while in
our model the quality parameter $q^t$ is never observed by the learner.

\section{Model}\label{sec:model}
We study a multi-armed bandit setting with $k$ arms where each arm $a \in [k]$ is associated with an \emph{intrinsic mean} parameter
{$\theta(a)\in[0,1]$} which is
unknown to the learner. We define $\mathcal{F}^t(a)$ as the distribution of the reward of arm $a$ 
in round $t$ and 
we assume that it
{has} positive measure only on the interval $[0,1]$. The distributions are adaptively selected by the
adversary subject to the constraint that the means $\mu^t(a) = \Exp_{r \sim \mathcal{F}^t(a)}[r]$ must satisfy a rank-1 constraint, described below. Formally, the protocol between the
learner and the adversary at each round $t=1..T$ is as follows:

\begin{compactitem}
	\vspace{-\topsep}
\item The learner chooses a distribution {$p^t$} over the $k$ arms.
	\setlength{\itemsep}{0pt}
\item The adversary chooses a{n \emph{adversarial}} \emph{quality} parameter $q^t \in [0,1]$ and
  distributions $\mathcal{F}^t(a)$ supported in $[0,1]$ with mean { $\mu^t(a) =  q^t\cdot\theta(a)$.}
  	\setlength{\itemsep}{0pt}
  \item Rewards $r^t(a) \sim \mathcal{F}^t(a)$ are drawn.
  	\setlength{\itemsep}{0pt}
  \item Learner draws {$a^t \sim p^t$}
  and observes $r^t(a^t)$.
\end{compactitem}

For ease of presentation, we assume that the highest intrinsic mean is equal
to $1$, i.e. $\max_{a'} \theta(a')=1$. This can be done by appropriately
scaling down all the adversarial qualities {replacing $\theta(a)$ by
$\frac{\theta(a)}{\max_{a'} \theta(a')}$ and $q^t$ by $q^t \cdot \max_{a'} \theta(a')$.}

\paragraph{Regret.} The goal of the learner is to {maximize the aggregate reward she accumulate{s}.
To evaluate the performance of the learning algorithm, we compare this aggregate
reward with the best strategy of the learner if she had access to the reward
distributions of each arm -- in that case, the optimal strategy is to select the
arm with the highest intrinsic mean.}
{The degradation that the algorithm incurs compared to this aware setting is captured by the notion of pseudo-regret in the purely stochastic setting.} Even though we have an adversarial component on the rewards, the optimal arm to pull in each round is
always an arm that maximizes $\theta(a)$ irrespectively of the scaling chosen by the
adversarial. This allows us to define pseudo-regret in the following way:
\begin{align*} \Reg &= \max_{a \in [k]} \Exp \left[ \sum_{t=1}^T r^t(a^{\star}) - r^t(a^t) \right] =
 \sum_{t=1}^T \mu^t(a^{\star}) - \mu^t(a^t) =\sum_{t=1}^T q^t \cdot \Delta(a^t),
 \end{align*}
where $\Delta(a) = \theta(a^{\star}) - \theta(a) = 1-\theta(a)$.
In other words, the pseudo-regret is worst-case over the sequence $q(t)$ but it
is in expectation over the draws
$r^t(a) \sim \mathcal{F}^t(a)$.
\section{Algorithms robust to adversarial scaling}\label{sec:algorithm}
In this section, we show that two algorithms, one that we introduce from the active arm elimination family (Section~\ref{ssec:aae_algo}) and one existing from the mirror descent family (Section~\ref{ssec:broad}) achieve the desired robustness to adversarial scaling. A common property of both algorithms is their adaptivity on the number of rounds; they do not scale with the number of rounds which is easily targeted by adversarial scaling attacks.~\footnote{The need for such adaptivity was only known in adversarial settings prior to our work.} In the next section, we show that other algorithms with improved guarantees in the stochastic regime that do not enjoy such adaptivity have their performance severely compromised at the presence of adversarial scaling.

\subsection{Active Arm Elimination with Adversarial Scaling}\label{ssec:aae_algo}
The algorithm we introduce is based on Active Arm Elimination of Even-Dar, Mannor, and Mansour \cite{Even-DarManMan06}, but is appropriately adapted to handle the adversarial scaling of the rewards. 

Classical Active Arm Elimination keeps a set of active arms (initially all arms)
and selects arms in a round-robin fashion among the active arms, updating the
empirical mean of the selected arm as well as a confidence interval around it. The latter ensures that when samples are i.i.d. ($q^t$ same across rounds), with high probability, the actual mean lies within the confidence interval
(this comes from an application of a Chernoff bound). Once the confidence
intervals cease overlapping, we are confident that the dominated arm is not the arm with the highest intrinsic quality, thus it is safe to eliminate it
from the active arms and never select it again. 

{Trying to extend this approach to adversarial scaling, one encounters some difficulties. One issue is that classic Active Arm Elimination selects arms deterministically. The adversary can easily cause linear regret to any deterministic policy by setting $q^t = 0$
anytime the algorithm is about to pull the optimal arm. The natural way to get
around this difficulty is to pull arms in the active set randomly.}
{A more serious issue is that the number of samples (arm pulls) is no longer a
meaningful quantity.} The adversary may provide many samples initially with quality
$q^t=0$. These samples do not help in informing our estimates about where the
intrinsic qualities $\{\theta(a)\}$ lie; therefore treating these as real
samples can provide a misleading picture of the confidence intervals. Ideally,
we would like to use as \emph{effective samples} the total adversarial quality
at rounds where we selected each arm $a$, i.e. $\sum_{t:a^t=a}q^t$. Since we do
not have access to this quantity, we need to design our confidence intervals in
a way that will be robust to this adversarial scaling.

The main idea behind robustifying our confidence intervals is to use the reward
of our algorithm as a proxy of the total effective samples of each arm. At the rounds that we selected arm $a^{\star}$, the reward of the algorithm is, in expectation, equal to the effective samples of $a^{\star}$ (since $\theta(a^{\star})=1$). Since we select each non-eliminated arm with equal probability, the total reward of the algorithm $S$ serves as proxy for the effective rounds of each arm ($\frac{S}{k}$ is a lower bound and $S$ is an upper bound). Our
algorithm, 
formalized in Algorithm \ref{alg:AAEAS}, uses the proxy $S$ instead of the unknown effective samples to construct confidence intervals for the arms; in fact, there is a single confidence interval $CB(S)$.

\begin{algorithm}[H]
\caption{Active Arm Elimination with Adversarial Scaling (AAEAS)}
\label{alg:AAEAS}
\begin{algorithmic}
 \STATE Initialize the set of active arms $\mathcal{A}=[k]$, the aggregate reward for each arm $R(a)=0$, and the total reward collected by the algorithm $S=0$.
 \FOR{t=1\dots T}
  \STATE Select $a^t$ randomly across the set of active arms $\mathcal{A}$ and
  earn reward $r^t(a^t)$ \STATE Update the total reward earned: $$S\gets S+r^t(a^t)$$
  \STATE Update empirical reward of selected action: $$R(a^t)\gets R(a^t)+r^t(a^t).$$
  \STATE Eliminate arms
  based on algorithm-induced confidence intervals, i.e. remove all $a'$ from
  $\mathcal{A}$ if $$ R(a')+ \CB(S) < \max_{a\in \mathcal{A}} R(a) $$
  for confidence bound (setting $\delta'=\frac{(k+1)T}{\delta}$): $$\CB(S):= 2\sqrt{\max\big(
4 S \log(2/\delta')
  ,
  16 k \log^2(2/\delta')\big)}
 	$$
  \ENDFOR
  \end{algorithmic}
\end{algorithm}

\begin{theorem}\label{thm:scaling_result}
The AAEAS algorithm (Algorithm~\ref{alg:AAEAS}) run with $\delta=\frac{1}{T}$ has pseudo-regret  at most: 
$$
O \left( \sum_{a\neq a^{\star}} \frac{k \log\big(kT\big)}{\Delta(a)} \right)
$$
\end{theorem}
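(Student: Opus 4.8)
The plan is to track, for each arm $a$, its \emph{effective number of samples} $N(a) = \sum_{t : a^t = a} q^t$, and to first rewrite the pseudo-regret in these terms. Since $\Reg = \sum_t q^t \Delta(a^t) = \sum_{a} \Delta(a) N(a)$ and $\Delta(a^{\star})=0$, it suffices to bound $\sum_{a \neq a^{\star}} \Delta(a) N(a)$, where each $N(a)$ is frozen once $a$ leaves $\mathcal{A}$. The driving observation, exactly as the design of the algorithm suggests, is that the observable total reward $S$ is a good proxy for the unobservable quantities: because $\Exp[r^t(a^t)\mid \text{history}] = q^t \theta(a^t)$, we have $S \approx \sum_a \theta(a) N(a)$, so $S$ simultaneously (i) upper bounds the conditional variance $\theta(a) N(a)$ of every arm's reward process and (ii) sandwiches the effective samples of the optimal arm, $N(a^{\star}) \le S \le k\, N(a^{\star})$ up to concentration, since all active arms are pulled with equal probability and $\theta(a^{\star})=1$.

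First I would fix a \emph{good event} on which the following hold simultaneously for every arm $a$ and every round $t$, by a union bound over the $k{+}1$ reward processes and the $T$ rounds (this is the source of $\log(2/\delta') = O(\log(kT))$): (a) $|R(a) - \theta(a) N(a)| \le \tfrac14 \CB(S)$; and (b) $|N(a) - N(a^{\star})| \le \tfrac14 \CB(S)$ for as long as both arms are active. Both are martingale concentration statements: for (a) the martingale is $R^t(a) - \theta(a) N^t(a)$ with increments in $[0,1]$ and total conditional variance at most $\theta(a) N(a) \le \sum_{a'} \theta(a') N(a') \approx S$; for (b) the martingale is $N^t(a) - N^t(a^{\star})$, whose increments have conditional mean zero while both arms are active. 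Applying Freedman's inequality and bounding the variance by $S$ yields a deviation of order $\sqrt{S \log(2/\delta')} + \log(2/\delta')$, and the two branches of the $\max$ inside $\CB(S)$ are precisely the variance-dominated and range-dominated regimes of this bound (the $\sqrt{k}$ in the second branch absorbing the additive range terms when $S$ is still small).

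On the good event the remaining argument is deterministic. For correctness, since $a^{\star}$ is active we have $R(a^{\star}) \ge N(a^{\star}) - \tfrac14\CB(S)$, while any arm $a$ satisfies $R(a) \le \theta(a) N(a) + \tfrac14\CB(S) \le N(a) + \tfrac14 \CB(S) \le N(a^{\star}) + \tfrac12 \CB(S)$ using (b); hence $\max_{a} R(a) - R(a^{\star}) \le \tfrac34\CB(S) < \CB(S)$ and $a^{\star}$ is never eliminated. For the regret, consider a suboptimal arm $a$ at any round before it is eliminated: the survival condition $R(a) + \CB(S) \ge \max_{a'} R(a') \ge R(a^{\star})$ combined with (a) gives $N(a^{\star}) - \theta(a) N(a) = O(\CB(S))$, and rewriting $N(a^{\star}) - \theta(a)N(a) = \Delta(a) N(a) + (N(a^{\star}) - N(a))$ and invoking (b) yields both $\Delta(a) N(a) = O(\CB(S))$ and $\Delta(a) N(a^{\star}) = O(\CB(S))$. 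Feeding $N(a^{\star}) \ge S/k - O(\CB(S))$ into the second and using $\CB(S) = O(\sqrt{S \log(2/\delta')})$ gives a self-bounding inequality $\Delta(a)\, S = O\!\big(k \sqrt{S\log(2/\delta')}\big)$, i.e.\ $S = O\!\big(k^2 \log(2/\delta')/\Delta(a)^2\big)$ throughout the lifetime of arm $a$. Substituting this cap on $S$ back into $\Delta(a) N(a) = O(\CB(S))$ bounds the regret charged to arm $a$ by $O\!\big(k \log(kT)/\Delta(a)\big)$; summing over $a \neq a^{\star}$ proves the claim, and the high-probability statement is converted to the stated pseudo-regret bound by noting that the failure event of probability $\delta = 1/T$ contributes at most $\delta \cdot T = O(1)$ to the expected regret.

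The step I expect to be the main obstacle is making the \emph{self-referential} use of $S$ rigorous: $S$ appears both as the random quantity being concentrated (through the variance proxy $\theta(a)N(a) \le S$) and as the argument of the width $\CB(S)$ that is supposed to contain the resulting deviation. A single-scale union bound is not available because the variance proxy is itself data-dependent and growing, so I would handle this by a peeling (dyadic) argument over the range of $S$, applying Freedman at each scale $2^j \le S < 2^{j+1}$ and paying only an extra $\log T$ factor that is already absorbed into $\log(2/\delta')$. A secondary technical point is that the balance estimate (b) must persist across the \emph{varying} active set: once an arm is eliminated its effective-sample count freezes, so the comparison $N(a) \le N(a^{\star}) + \tfrac14\CB(S)$ used in the step $N(a^{\star}) \ge S/k - O(\CB(S))$ (which sums the frozen counts of already-eliminated arms) must be argued to remain valid after the last round in which both arms were simultaneously active, which follows because $N(a^{\star})$ only increases thereafter.
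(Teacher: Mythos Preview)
Your proposal is correct and arrives at the same bound, but the route differs from the paper's in one structural choice. You center $R(a)$ around $\theta(a)\,N(a)$, where $N(a)=\sum_{t:a^t=a}q^t$ is conditioned on the realized arm pulls; this forces you to introduce the extra balance event~(b), $|N(a)-N(a^{\star})|\le\tfrac14\CB(S)$, because the effective-sample counts of different arms are only equal \emph{in expectation}. The paper instead centers $R(a)$ around the full expectation $\bar R(a)=\sum_{\phi}\sum_{\tau}\frac{\theta(a)\,q^{\tau}}{k-\phi}$, taken over both the reward draws and the uniform arm selection. Because the factor $\frac{q^\tau}{k-\phi}$ is common to all active arms, the balance $\bar R(a)=\theta(a)\,\bar R(a^{\star})$ holds \emph{deterministically}, so no analogue of your event~(b) is needed; the paper then runs exactly your self-bounding argument with $\bar S=\sum_a\bar R(a)$ in place of $S$, obtaining $\bar S^{T(a)}=O\big(k\log(2/\delta')/\Delta(a)^2\big)$ directly. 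What the paper's choice buys is a shorter argument (one concentration event per arm, no balance event, no need to track frozen $N(a)$ across eliminations); what your choice buys is a more honest treatment of the concentration step. The paper invokes a ``standard Chernoff bound'' for $R(a)$ around $\bar R(a)$, but the summands are not independent since the elimination times $\tau_\phi$ are stopping times, so strictly speaking a martingale tool (exactly your Freedman-with-peeling over scales of $S$) is what is required. Your identification of the self-referential use of $S$ as the main technical obstacle is accurate, and the peeling fix you propose is the right one.
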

The proof follows the standard active arm elimination analysis but replaces the samples of each arm by the proxy-created confidence interval. For completeness, we provide the Chernoff bound which we use in our analysis.
\begin{lemma}[standard Chernoff bound]\label{lem:chernoff}
Let $\{X_i\}$ be independent random variables in $[0,1]$, $\mu=\Exp[\sum_i X_i]$, $\epsilon>0$:
\begin{align*}
    \Pr\big[\sum_i X_i-\mu \geq \epsilon\mu \big]\leq 2e^{-\mu\epsilon^2/3}
\end{align*}
\end{lemma}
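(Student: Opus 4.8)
The plan is to prove this via the standard exponential-moment (Chernoff) method. First I would rewrite the target event in multiplicative form: since $\mu = \Exp[\sum_i X_i]$, the event $\{\sum_i X_i - \mu \ge \epsilon\mu\}$ is exactly $\{\sum_i X_i \ge (1+\epsilon)\mu\}$. For any $\lambda > 0$, applying Markov's inequality to the nonnegative random variable $e^{\lambda \sum_i X_i}$ gives
$$\Pr\Big[\textstyle\sum_i X_i \ge (1+\epsilon)\mu\Big] \le e^{-\lambda(1+\epsilon)\mu}\,\Exp\big[e^{\lambda \sum_i X_i}\big],$$
so everything reduces to controlling the moment generating function on the right and then optimizing over $\lambda$.

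Next I would factor and bound the MGF. By independence of the $X_i$ we have $\Exp[e^{\lambda\sum_i X_i}] = \prod_i \Exp[e^{\lambda X_i}]$. Writing $\mu_i = \Exp[X_i]$ so that $\sum_i \mu_i = \mu$, I would exploit that each $X_i$ lies in $[0,1]$: by convexity of $x \mapsto e^{\lambda x}$ on $[0,1]$ we have the pointwise bound $e^{\lambda X_i} \le 1 + (e^\lambda - 1)X_i$, hence $\Exp[e^{\lambda X_i}] \le 1 + (e^\lambda-1)\mu_i \le \exp\big((e^\lambda-1)\mu_i\big)$, where the last step is the elementary inequality $1 + y \le e^y$. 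Taking the product over $i$ collapses the $\mu_i$ into their sum $\mu$, giving $\Exp[e^{\lambda\sum_i X_i}] \le \exp\big((e^\lambda - 1)\mu\big)$. Combining with the Markov step yields
$$\Pr\Big[\textstyle\sum_i X_i \ge (1+\epsilon)\mu\Big] \le \exp\Big(\mu\big[(e^\lambda - 1) - \lambda(1+\epsilon)\big]\Big).$$

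The final step is to choose $\lambda$ optimally and simplify the exponent. Minimizing $(e^\lambda - 1) - \lambda(1+\epsilon)$ over $\lambda$ gives the stationary point $\lambda = \ln(1+\epsilon)$, and substituting produces the sharp form $\Pr[\sum_i X_i \ge (1+\epsilon)\mu] \le \exp\big(-\mu[(1+\epsilon)\ln(1+\epsilon) - \epsilon]\big)$. I expect the main (and only genuinely non-mechanical) obstacle to be the elementary scalar inequality $(1+\epsilon)\ln(1+\epsilon) - \epsilon \ge \epsilon^2/3$, which converts this exact exponent into the stated $\epsilon^2/3$ rate; this is handled by a short calculus or Taylor-expansion argument over the relevant range of $\epsilon$, and the spare factor of $2$ on the right-hand side leaves ample slack to absorb any looseness in this final estimate. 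Assembling these pieces gives the claimed bound.
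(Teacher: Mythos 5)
Your exponential-moment derivation is the standard route (the paper states this lemma without proof, citing it as standard), and everything through the optimized exponent $-\mu\left[(1+\epsilon)\ln(1+\epsilon)-\epsilon\right]$ is correct. The genuine gap is the very last step: the scalar inequality $(1+\epsilon)\ln(1+\epsilon)-\epsilon \ge \epsilon^2/3$ is \emph{false} once $\epsilon$ is moderately large. It already fails near $\epsilon \approx 1.81$ (e.g., at $\epsilon=2$ the left side is $3\ln 3 - 2 \approx 1.296$ while $\epsilon^2/3 \approx 1.333$), and for large $\epsilon$ the left side grows only like $\epsilon\ln\epsilon$ against the quadratic right side. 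Moreover, your claim that the spare factor of $2$ leaves ``ample slack'' cannot rescue this: the deficit sits in the exponent and is multiplied by $\mu$, so it is exponentially large in $\mu$, while the factor $2$ is a constant. In fact the lemma as literally stated (for all $\epsilon>0$) is false: for a sum of many small Bernoullis with mean $\mu$, the upper tail at $(1+\epsilon)\mu$ is genuinely of order $e^{-\mu[(1+\epsilon)\ln(1+\epsilon)-\epsilon]}$ (Poisson-like); taking $\mu=20$ and $\epsilon=10$ this is roughly $e^{-330}$, vastly exceeding $2e^{-\mu\epsilon^2/3}=2e^{-667}$. So no proof of the unrestricted statement exists, and your proposed ``short calculus argument over the relevant range'' is precisely where the attempt breaks.

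The repair is the textbook one: restrict to $0<\epsilon\le 1$, which is the range in which standard formulations of this bound are stated and the regime in which the paper's analysis effectively invokes it (its confidence-bound construction handles the small-$\bar R$ case by the second, deterministic term of $\CB(S)$). In that range one has $(1+\epsilon)\ln(1+\epsilon)-\epsilon \ge \frac{\epsilon^2}{2+2\epsilon/3} \ge \frac{3\epsilon^2}{8} \ge \frac{\epsilon^2}{3}$, which closes your argument; for $\epsilon>1$ only bounds of the form $e^{-\mu\epsilon/3}$ are available. One further remark: the paper's factor of $2$ is naturally read as covering the two-sided deviation (the lower tail satisfies $\Pr\big[\sum_i X_i-\mu \le -\epsilon\mu\big]\le e^{-\mu\epsilon^2/2}$ by the same method), which is what the paper actually uses when it asserts $R(a)\in\big[\bar R(a)\pm\sqrt{2\bar R(a)\log(2/\delta')}\big]$; your one-sided argument does not need the factor $2$, but, as noted, it also cannot use it to absorb an exponent-level deficiency.
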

\begin{proof}[Proof of Theorem~\ref{thm:scaling_result}]
To prove the guarantee, we need to ensure two properties: i) the arm $a^{\star}$ with highest intrinsic quality, with high probability, never gets eliminated and ii) we can bound the regret incurred by each suboptimal arm.

Our analysis is based on a few events occurring and we bound the failure probability of these events. We separate the time horizon in phases where a phase $\phi$ begins when exactly $\phi$ arms have been eliminated; denote by $\tau_{\phi}$ the round that the $\phi$-th arm gets eliminated.  First, the empirical reward $R(a)$ that we experience for each arm is close to its expected reward $\bar R(a)$, that is, for each arm $a$ and each round $t$, with probability $1-\delta'$:
\begin{align}\label{eq:reward_of_each_arm}
  R(a)\in\Big[ \bar{R}(a) \pm  \sqrt{2 \bar{R}(a) \log(2/\delta')} \Big]
 \text{ where } \quad \bar R(a)=
  \sum_{\phi=0}^{k-1}
\sum_{\tau=\tau_{\phi}+1}^{\min(t,\tau_{\phi+1})} \frac{\theta(a)\cdot q^{\tau}}{k-\phi}
\end{align}
This follows by a Chernoff bound (Lemma~\ref{lem:chernoff}), since the rewards at round $t$ are  supported in $[0,1]$ and the arms are sampled uniformly at random from the set of active arms.  

In a similar fashion, we can provide a similar confidence bound for the reward of our algorithm. For any round $t$, with probability $1-\delta'$:
\begin{equation}\label{eq:reward_of_algorithm}
  S\in\Big[ \bar S \pm \sqrt{2 \bar S \log(2/\delta')} \Big] \quad \text{for}
  \quad \bar S = \sum_a \bar R(a)
 \end{equation}
Setting all failure probabilities to $\delta'=\frac{\delta}{(k+1)T}$,
the probability that any of them fails is at most $\delta$. In the remainder of
the proof, we assume that none of these bounds fails.

\paragraph{First property.}  First, we establish that, when these bounds do not
fail, the arm $a^{\star}$ with the highest intrinsic mean does not become
eliminated. Since the mean for any arm $a\neq a^{\star}$ is at most the mean for $a^\star$, equation
\eqref{eq:reward_of_each_arm} implies:
  $$R(a) - R(a^\star) \leq 2 \sqrt{2\bar R(a^\star) \log(2/\delta')}$$
We now show that, at every round, this difference is
covered by the confidence bounds of our algorithm, i.e:
\begin{equation}\label{eq:property_1}
  \CB(S) \geq
  2 \sqrt{2\bar R(a^\star) \cdot \log(2/\delta')}
\end{equation}
  If $\bar R(a^\star) \leq 8 k \log(2/\delta') $ then \eqref{eq:property_1} holds as the second term of $\CB(S)$ is $2\sqrt{16k\log^2(2/\delta')}\geq2\sqrt{2\bar R(a^\star) \cdot \log(2/\delta')}$.
Otherwise, \eqref{eq:reward_of_algorithm} and the optimality of $a^{\star}$ imply:
  \begin{align*}S \geq \bar S - \sqrt{2\bar S \log(2/\delta')} \geq \bar R(a^\star) - 
  \sqrt{2k \bar R(a^\star) \log(2/\delta')}
\geq \frac{\bar R(a^\star)}{2}\end{align*}
since the function  $x\mapsto \frac{x}{2}-\frac{1}{2}\sqrt{x\cdot 8k\log(2/\delta')}$ is increasing for $x\geq 8k\log(2/\delta')$.

Hence, \eqref{eq:property_1} holds as the first term of $\CB(S)$ is $2\sqrt{4S\log(2/\delta')}\geq2\sqrt{2\bar R(a^\star) \cdot \log(2/\delta')}$. Since, in both cases, \eqref{eq:property_1} holds, arm $a^{\star}$ does not become eliminated.

\paragraph{Second property.}
Now we bound the regret coming from each suboptimal arm. Let's consider the
contribution to regret of arm $a\neq a^{\star}$ with gap $\Delta(a)$. First we
bound the difference of empirical rewards of two arms. By
\eqref{eq:reward_of_each_arm} and the fact that $\theta(a^\star) = 1$ we have
that:
$$R(a^\star) - R(a) \in \Big[  \Delta(a) \cdot \bar R(a^\star) 
\pm 2 \sqrt{ 2\bar R(a^\star) \log(1/\delta')} \Big] $$

We want to argue that once $\bar R(a^\star)$ is large enough, the difference between
the two arms $R(a^\star) - R(a)>\CB(S)$, which leads to arm $a$ getting eliminated and not contributing further regret. Let $T(a)$ denote the time of the elimination of arm $a$. Since arm $a$ is selected with equal probability with other active arms until then, the total expected regret from $a$ is at most:
\begin{align}\label{eq:arm_regret}
\Delta(a)\cdot\sum_{\phi=0}^{k-1}\sum_{\tau_{\phi}+1}^{\min(\tau_{\phi+1}, T(a))}\frac{q_t}{k-\phi}\leq\Delta(a)\cdot\bar{S}^{T(a)}
\end{align}
where $\bar{S}^{T(a)}$ denotes the value of $\bar S$ at round $T(a)$; recall that the latter is defined in \eqref{eq:reward_of_algorithm}. We will argue that arm $a$ is eliminated by the time we have: 
\begin{align}\label{eq:setting_elim_time}\bar S^{T(a)}>\frac{128k\log(2/\delta')}{\Delta(a)^2},\end{align}

In order for an arm to be eliminated the difference $R(a^{
\star})-R(a)$ needs to be relatively large. This difference can be expressed as:
\begin{align}
  R(a^\star) - R(a)
  &\geq \bar{R}(a^{\star})-\bar R(a) -2 \sqrt{2\big( \bar R(a^\star)\big) \log(2/\delta')}\nonumber
    \\
  &=\Delta(a) \cdot \bar R(a^\star)  - 2 \sqrt{ 2\bar R(a^\star) \log(2/\delta')}\nonumber
\\ &\geq  \Delta(a) \cdot  \frac{\bar S^{T(a)}}{k} - 2 \sqrt{ 2\frac{\bar S^{T(a)}}{k} \log(2/\delta')}.\label{eq:regret_by_alg}
\end{align}
The first inequality follows by applying \eqref{eq:reward_of_each_arm} for both arm $a$ and $a^{\star}$, and noting that $\bar R(a^{\star})\geq \bar R(a)$. The second inequality is since $\theta(a^{\star})-\theta(a)=\Delta(a)$. For the third inequality, first note that:
\begin{equation}\label{eq:lower_bound_S}\bar R(a^{\star})\geq \frac{\bar S^{T(a)}}{k},
\end{equation}
by the definitions of $\bar R(a^{\star})$ and $\bar{S}^{T(a)}$, and since $\theta(a^{\star})=1$ and $\theta(a)\leq 1$ for all $a\neq a^{\star}$. Combining \eqref{eq:lower_bound_S} and \eqref{eq:setting_elim_time}, we therefore establish that $\bar R(a^{\star})\geq \frac{\bar S^{T(a)}}{k}\geq \frac{2\log(2/\delta')}{\Delta(a)^2}$. The third inequality then follows directly since the function $ x \mapsto \Delta(a) x  - 2 \sqrt{ 2x \log(2/\delta')}$ is increasing for  $x \geq \frac{2\log(2/\delta')}{\Delta(a)^2}$.

We can re-arrange the terms in equation \eqref{eq:setting_elim_time}
and obtain:
\begin{align}
\Delta(a)\frac{\bar S^{T(a)}}{2k}\geq 2\sqrt{2\frac{\bar S^{T(a)}}{k}\log(2/\delta'})
\label{eq:subtracting}
\end{align}
Combining \eqref{eq:regret_by_alg} and \eqref{eq:subtracting} with the fact that $\Delta(a)\leq 1$, it holds:
\begin{align*}
    R(a^{\star})-R(a)\geq \frac{\bar S^{T(a)}}{2k}\Delta(a)\geq \sqrt{32\bar{S}^{T(a)} \log(2/\delta')}.
\end{align*}
The latter term is greater than the second term of the confidence bound $CB(S)$. Comparing to the first, denoting $S^{T(a)}$ the value of $S$ at $T(a)$, it holds that $S^{T(a)}\leq 2\bar S^{T(a)}$ by \eqref{eq:reward_of_algorithm} and \eqref{eq:setting_elim_time}.  Hence, the latter RHS also dominates the first term of the confidence bound $CB(S)$. As a result, if arm $a$ was not eliminated by then, it gets eliminated when the expected reward of the algorithm becomes $\bar S^{T(a)}=\frac{128k\log(2/\delta')}{\Delta(a)^2}$; by \eqref{eq:arm_regret} this implies that the expected contribution of arm $a$ to the regret is at most $\frac{128k\log(2/\delta')}{\Delta(a)}$. Summing across all suboptimal arms $a\neq a^{\star}$ and setting $\delta=1/T$ completes the proof.
\end{proof}

\begin{remark}
While the proof is written in expectation over draws $r^t(a^t) \sim F_a^t$ all arguments are high-probability arguments. The bound therefore can be
converted to a high probability regret bound (as usual in stochastic bandits)
with the difference that instead of capping the performance of each arm by
$\Delta(a) T$ as in the previous remark, we cap it by $\sqrt{T}$. Formally, we
obtain that with probability $1-\delta$ we obtain the following bound on actual
regret:
$$O\left(\sum_a
\min\left(\sqrt{T}, \frac{k\cdot \log(kT/\delta)}{\Delta(a)} \right) \right)$$

\end{remark}

\subsection{Online Mirror Descent with Log-Barrier}\label{ssec:broad}
The second algorithm that we show to be robust to adversarial scaling lies in the mirror descent family and has a stronger regularizer, log-barrier. It was initially suggested by \cite{FosterLiLySrTa16} for a fixed learning rate $\eta$, who proved that it attains first-order bounds for pseudo-regret. The update on the probabilities is the following (Algotithm 3 in \cite{FosterLiLySrTa16} adapted to rewards):
\begin{align}\label{eq:omdlb_update}
    &p^t(a^{t-1})=\frac{ p^{t-1}(a^{t-1})}{1-\eta r^t(a^{t-1})+\gamma p^{t-1}(a^{t-1})}\nonumber\\
    &p^t(a)=\frac{p^{t-1}(a)}{1+\gamma p^{t-1}(a)} \quad \forall a\neq a^{t-1}
\end{align}
where $\gamma\geq 0$ is such that $p$ is a valid probability distribution. 

Via using a neat doubling trick to update the learning rate $\eta$, Wei and Luo \cite{wei2018more} showed that, in fact, this algorithm can also attain stochastic guarantees (Algorithm 3 in \cite{wei2018more} for the particular doubling trick). In particular, the algorithm halves the learning rate and restarts once:
\begin{align*}
    \sum_{\tau=t_R}^t\sum_{a=1}^k p^\tau(a)^2 \big(\widehat{r}^\tau(a)-r^\tau(a^t)\big)^2\geq \frac{k\ln T}{3\eta^2},
\end{align*}
where $t_R$ is the time of the last restart and $\widehat{r}^\tau(a) = r^\tau(a) / p^\tau(a) \cdot {\bf 1}[a=a^t]$ is the importance sampling estimator. Their algorithm is a particular instantiation of a more general framework they termed \emph{BROAD}.

We show that the BROAD algorithm (with no modification) is robust to adversarial scaling. The proof follows from replacing the potential the potential function in the proof of Theorem 10 in \cite{wei2018more} to a potential that accommodates adversarial scaling. The remaining arguments are essentially the same as the ones used by \cite{wei2018more} in their analysis of the stochastic setting. 

\begin{theorem}\label{thm:broad_result} Online Mirror Descent with Log-Barrier with the above doubling (also known as BROAD)
has pseudo-regret at most
$
O\Big(\frac{k\log T}{\Delta}\Big)
$ where $\Delta$ is the minimum non-zero gap on intrinsic means.
\end{theorem}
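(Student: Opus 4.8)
The plan is to follow the BROAD analysis of Wei and Luo for the stochastic setting, isolating the single place where the reward-generating process enters and adapting it to the scaling $q^t$. I would first record the per-phase guarantee of Online Mirror Descent with the log-barrier regularizer: for a fixed learning rate $\eta$, the standard local-norm bound for this regularizer gives, between two consecutive restarts, regret of the form $\frac{k\log T}{\eta} + \eta\sum_{\tau}\sum_a p^\tau(a)^2(\widehat r^\tau(a)-r^\tau(a^t))^2$. The doubling rule halts each phase exactly when the accumulated second-order term reaches $\frac{k\ln T}{3\eta^2}$, so within every phase the stability term is at most $\frac{k\ln T}{3\eta}$ and the phase regret is $O(k\log T/\eta)$. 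Since the learning rate is halved at each of the (at most logarithmically many) restarts, summing the geometric series shows the total is dominated by the final phase, giving $\Reg = O(k\log T/\eta_{\mathrm{final}})$. None of this depends on how rewards are generated, so it transfers verbatim.

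Next I would convert the dependence on $\eta_{\mathrm{final}}$ into a dependence on the total observed variance $V := \sum_{t}\sum_a p^t(a)^2(\widehat r^t(a)-r^t(a^t))^2$. Because each completed phase contributes at least $\frac{k\ln T}{3\eta^2}$ to $V$ at its own rate, and the rates form a geometric sequence ending at $\eta_{\mathrm{final}}$, one gets $\frac{1}{\eta_{\mathrm{final}}^2} = O\!\left(\frac{V}{k\log T}\right)$ and hence the self-referential bound $\Reg = O\!\left(\sqrt{k\log T\cdot V}\right)$. This is again purely a statement about the observed trajectory and needs no modification.

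The crux, and the only place where adversarial scaling must be handled, is bounding $\Exp[V]$, where I would replace the stochastic potential of Wei and Luo by one weighted by the quality $q^t$. Computing the conditional expectation over $a^t\sim p^t$ of a single summand and using the importance-weighted estimator $\widehat r^t(a) = r^t(a)/p^t(a)\cdot\one[a=a^t]$, each term expands into a reward-squared factor times a probability factor that is bounded by an absolute constant. The key observation is that, since rewards lie in $[0,1]$, $\Exp[r^t(a)^2]\le \Exp[r^t(a)] = q^t\theta(a)$, which injects precisely the scaling factor $q^t$; carefully accounting for the contribution of the optimal arm, whose variance is controlled by $1-p^t(a^{\star})=\sum_{a\neq a^{\star}}p^t(a)$, then yields $\Exp[V] = O\!\left(\sum_t q^t\sum_{a\neq a^{\star}}p^t(a)\right)$. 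Since each suboptimal arm has gap at least $\Delta$, this $q^t$-weighted suboptimality mass is at most $\Reg/\Delta$ by the definition of pseudo-regret, so $\Exp[V] = O(\Exp[\Reg]/\Delta)$. I expect this step to be the main obstacle, since it is where the rank-1 mean structure and the boundedness of rewards must be combined to reproduce the $q^t$ factor that keeps the variance and the regret on the same scale.

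Finally I would close the loop. Taking expectations in $\Reg = O(\sqrt{k\log T\cdot V})$ and applying Jensen's inequality gives $\Exp[\Reg] = O(\sqrt{k\log T\cdot \Exp[V]}) = O(\sqrt{k\log T\cdot \Exp[\Reg]/\Delta})$. Squaring and cancelling one factor of $\Exp[\Reg]$ leaves $\Exp[\Reg] = O(k\log T/\Delta)$, which is the claimed bound.
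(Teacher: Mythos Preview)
Your proposal is correct and follows essentially the same route as the paper: both bound the expected second-order term by $O\!\big(\sum_t q^t(1-p^t(a^\star))\big)$ using $\Exp[r^t(a)]\le q^t$, relate this $q^t$-weighted suboptimality mass to $\Reg/\Delta$, and close the self-bounding inequality $\Reg \le O(k\ln T) + O(\sqrt{k\ln T\cdot \Reg/\Delta})$. The only cosmetic differences are that the paper packages the Wei--Luo analysis as two black-box lemmas (your steps 1--3 re-derive them) and names the potential $H=\Exp[\sum_t q^t(1-p^t(a^\star))]$ explicitly; note also that the additive $O(k\ln T)$ term from the BROAD bound should be carried through your closure, though it is absorbed since $\Delta\le 1$.
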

We will use the following results proved in \cite{wei2018more}. The first lemma corresponds to equation (27) in their paper and the second to equation (29). They are restated here for rewards instead of losses.
\begin{lemma}[ \cite{wei2018more}]\label{lem:weiluo_aux} There is a constant $C$ such that for $p^t(a)$, $r^t(a)$ and $\hat r^t(a)$ in the BROAD algorithm it holds that:
\begin{align*}\Exp\Big[\sum_{t=1}^T r^t(a^{\star})-r^t(a)\Big] \leq Ck\ln T + C \cdot  \sqrt{(k\ln T) \Exp\Big[\sum_{t=1}^T\sum_{a=1}^k \big(p^t(a)^2\big(\widehat{r}^t(a)-r^t(a^t)\big)^2\Big]}
\end{align*}
\end{lemma}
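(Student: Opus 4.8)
The plan is to prove this as a standard Online Mirror Descent (OMD) regret bound for the log-barrier regularizer $R(p)=\frac{1}{\eta}\sum_{a}\ln\frac{1}{p(a)}$ equipped with the doubling schedule for $\eta$, following the template of \cite{wei2018more} but tracking the reward-based (rather than loss-based) quantities. First I would verify that the update \eqref{eq:omdlb_update} is exactly the OMD step $p^{t}=\arg\min_{p}\big\{\langle -\widehat{r}^{t},p\rangle+D_R(p,p^{t-1})\big\}$ over distributions $p$, with $D_R$ the Bregman divergence of the log-barrier and the $\gamma$ term playing the role of the Lagrange multiplier enforcing $\sum_a p(a)=1$. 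With this identification, the classical OMD decomposition gives, against a comparator $u$ close to $e_{a^{\star}}$, that $\sum_t\langle\widehat{r}^t,u-p^t\rangle$ is at most a \emph{penalty} term $\frac{1}{\eta}D_R(u,p^1)$ plus a sum of per-round \emph{stability} terms $\langle\widehat{r}^t,p^t-p^{t+1}\rangle-\frac{1}{\eta}D_R(p^{t+1},p^t)$. The penalty term is $O(k\ln T)$: with the uniform start $p^1(a)=1/k$ and a comparator clipped $1/T$-away from the boundary (so the log-barrier stays finite, at a negligible $O(1)$ cost to regret over all rounds), $D_R(u,p^1)$ contributes a $\ln T$ factor per coordinate. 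Taking expectations and using unbiasedness $\Exp[\widehat{r}^t(a)\mid p^t]=r^t(a)$ turns the left side into the instantaneous expected regret $\Exp[r^t(a^{\star})-r^t(a^t)]$.

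The crux is the stability term. The log-barrier has diagonal Hessian $\nabla^2 R(p)=\frac{1}{\eta}\mathrm{diag}(1/p(a)^2)$, so its dual local norm is $\|g\|_{\ast,p}^2=\eta\sum_a p(a)^2 g(a)^2$. A second-order (self-concordance) expansion then bounds each stability term by $O\big(\eta\sum_a p^t(a)^2\widehat{r}^t(a)^2\big)$, provided the iterates stay interior and the step is not too aggressive. Here I would invoke the \emph{shift-invariance} of the linear regret: because both $p^t$ and $u$ lie in the simplex, adding any constant $c^t\mathbf{1}$ to the gradient leaves $\langle c^t\mathbf{1},u-p^t\rangle=0$ unchanged, so the stability term may be re-centered to $O\big(\eta\sum_a p^t(a)^2(\widehat{r}^t(a)-r^t(a^t))^2\big)$. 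Choosing the shift $c^t=r^t(a^t)$ is exactly what produces the variance-type quantity appearing on the right-hand side of the statement. Collecting terms, for a \emph{fixed} $\eta$ one obtains an expected bound of the form $O(k\ln T)/\eta+\eta\cdot V$, where $V=\Exp\big[\sum_{t}\sum_a p^t(a)^2(\widehat{r}^t(a)-r^t(a^t))^2\big]$.

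It then remains to remove the dependence on the a-priori-unknown optimal $\eta\approx\sqrt{k\ln T/V}$, which is precisely the role of the doubling schedule. Within each epoch the restart rule caps the accumulated variance at $\frac{k\ln T}{3\eta^2}$, so the epoch's $\eta\cdot V$ contribution is $O(k\ln T/\eta)$ and matches the penalty term; summing over epochs with geometrically halving learning rates makes the total governed by the smallest (final) $\eta$, and relating that final $\eta$ to the realized cumulative variance $V$ through the restart thresholds yields $O(k\ln T)+O\big(\sqrt{(k\ln T)\,V}\big)$, which is the claimed inequality.

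The main obstacle I anticipate is making the local-norm stability bound rigorous under the importance-weighted estimator $\widehat{r}^t(a)=r^t(a)/p^t(a)\cdot\mathbf{1}[a=a^t]$, whose entries can be as large as $1/p^t(a)$. One must certify that the OMD update keeps $p^t$ in the interior of the simplex and that the self-concordance remainder is controlled even for these large increments; this is exactly where the log-barrier earns its keep, since its curvature diverges at the boundary and tames such updates where entropic regularization would not. A secondary technical point is that $\eta$ and $p^t$ are themselves random and adapted to the history, so all expectations must be taken with appropriate conditioning (via the tower property and unbiasedness of $\widehat{r}^t$) before the deterministic doubling argument is applied.
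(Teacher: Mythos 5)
Your proposal is correct and takes essentially the same route as the intended proof: the paper itself does not prove this lemma but imports it (restated for rewards) as equation (27) of Wei and Luo \cite{wei2018more}, whose argument is precisely your penalty--stability decomposition for log-barrier OMD with the re-centering by $r^t(a^t)$ (implemented there via an optimistic/constant predictor, equivalently your shift-invariance) and the doubling schedule to remove the tuning of $\eta$. The two obstacles you flag resolve exactly as you anticipate: a constant shift of $\widehat{r}^t$ is absorbed into the normalization multiplier $\gamma$ in \eqref{eq:omdlb_update}, so the iterates---not merely the regret---are invariant, and $p^t(a)\,\widehat{r}^t(a)\le 1$ keeps the local-norm/self-concordance condition $\eta\, p^t(a)\,\lvert \widehat{r}^t(a)-r^t(a^t)\rvert = O(\eta)$ satisfied for the constant initial learning rate.
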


\begin{lemma}[\cite{wei2018more}]\label{lemma:weiluo2}
Again in the context of BROAD, it holds that:
\begin{align*}
    \Exp_{a^t\in p^t}\Big[\sum_{a=1}^k p^t(a)^2 \big(\widehat{r}^t(a)-r^t(a^t)\big)^2\Big]
    \leq \max_a \Exp[r^t(a)] \cdot 2 \Exp[1-p^t(a^\star)]
\end{align*}

\end{lemma}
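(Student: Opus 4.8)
The plan is to prove the inequality by evaluating the left-hand side directly. First I would condition on the history through the start of round $t$, so that the vector $p^t$ and the reward distributions $\mathcal{F}^t(a)$ are fixed; the only fresh randomness is the reward draw $r^t(\cdot)$ and the arm draw $a^t\sim p^t$, and by the protocol these are independent given the history (rewards are realized before $a^t$ is sampled, and $p^t$ does not depend on $r^t$). It then suffices to bound $\Exp_{r^t,a^t}\big[\sum_a p^t(a)^2(\widehat r^t(a)-r^t(a^t))^2\big]$ by $2\max_a\Exp[r^t(a)]\cdot(1-p^t(a^\star))$ for every fixed history; taking the outer expectation over histories afterwards restores the stated form with $\Exp[1-p^t(a^\star)]$.

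Next I would evaluate the inner sum for a fixed realization $a^t=b$. Since $\widehat r^t(a)=r^t(a)/p^t(a)\cdot\mathbf{1}[a=a^t]$ is supported only on the pulled arm, the $a=b$ term contributes $p^t(b)^2(r^t(b)/p^t(b)-r^t(b))^2=r^t(b)^2(1-p^t(b))^2$, while each $a\neq b$ term contributes $p^t(a)^2 r^t(b)^2$. Writing $p_a:=p^t(a)$ and $r_b:=r^t(b)$, the inner sum equals $r_b^2\big[(1-p_b)^2+\sum_{a\neq b}p_a^2\big]$, and averaging over $a^t=b\sim p^t$ gives $\sum_b p_b\,r_b^2\big[(1-p_b)^2+\sum_{a\neq b}p_a^2\big]$.

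I would then take the expectation over the reward draw. Because rewards lie in $[0,1]$ we have $r_b^2\le r_b$ pointwise, so $\Exp[r_b^2]\le\Exp[r_b]=q^t\theta(b)\le q^t=\max_a\Exp[r^t(a)]$, where the last two steps use the rank-1 structure $\Exp[r^t(b)]=q^t\theta(b)$ and the normalization $\theta(a^\star)=1$. Pulling out this factor reduces the problem to the purely combinatorial identity
\begin{align*}
\sum_b p_b\Big[(1-p_b)^2+\sum_{a\neq b}p_a^2\Big]=\sum_b p_b(1-p_b)^2+\sum_a p_a^2(1-p_a)=\sum_b p_b(1-p_b)=1-\sum_b p_b^2,
\end{align*}
where the first equality reindexes the double sum via $\sum_b p_b\sum_{a\neq b}p_a^2=\sum_a p_a^2\sum_{b\neq a}p_b=\sum_a p_a^2(1-p_a)$, and the second factors each summand as $p_b(1-p_b)\big[(1-p_b)+p_b\big]$. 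This telescoping is the one nonobvious step; everything else is bookkeeping.

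Finally I would bound $1-\sum_b p_b^2\le 1-p^t(a^\star)^2=(1-p^t(a^\star))(1+p^t(a^\star))\le 2(1-p^t(a^\star))$, dropping the nonnegative terms $\sum_{a\neq a^\star}p_a^2$ and using $p^t(a^\star)\le 1$. Combining the three displays gives $\Exp_{r^t,a^t}[\cdots]\le q^t\cdot 2(1-p^t(a^\star))=2\max_a\Exp[r^t(a)]\cdot(1-p^t(a^\star))$, which is exactly the claim once the outer expectation over histories is taken. The main obstacle is purely organizational: keeping the order of the three expectations straight (average over $a^t$, then over $r^t$, then over the history) and spotting the telescoping identity that collapses the variance-like sum to $\sum_b p_b(1-p_b)$; no concentration inequality or deeper structural argument is needed, and the rank-1 assumption enters only through $\max_a\Exp[r^t(a)]=q^t$ and $\theta(b)\le 1$.
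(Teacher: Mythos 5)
Your proof is correct. Note, however, that the paper contains no proof of this lemma at all: it is imported verbatim from Wei and Luo \cite{wei2018more} (their equation (29)), restated for rewards, so there is no in-paper argument to compare against. Your computation is exactly the one underlying the cited equation: condition on the history so that $p^t$ and the reward distributions are fixed and $a^t$ is independent of the realized rewards; split the sum according to whether $a$ equals the pulled arm $b$, giving $r^t(b)^2\big[(1-p^t(b))^2+\sum_{a\neq b}p^t(a)^2\big]$; use the exchange-of-summation identity to collapse the average over $b\sim p^t$ to $1-\sum_b p^t(b)^2\leq 2\big(1-p^t(a^\star)\big)$; and bound $\Exp[r^t(b)^2]\leq\Exp[r^t(b)]=q^t\theta(b)\leq q^t=\max_a\Exp[r^t(a)]$ using boundedness in $[0,1]$ and $\theta(a^\star)=1$. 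All steps check out, including the telescoping $p_b(1-p_b)^2+p_b^2(1-p_b)=p_b(1-p_b)$. One small remark: when you pull $q^t$ out of the final expectation you are implicitly treating the scaling sequence as oblivious (a fixed sequence rather than a random variable adapted to the history); this is consistent with how the paper defines pseudo-regret (``worst-case over the sequence $q(t)$, in expectation over the draws'') and with its use of the lemma in \eqref{eq:broad_lower}, so it is not a gap, merely worth being aware of. In short, you have supplied a correct self-contained verification of a statement the paper only cites.
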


\begin{proof}[Proof of Theorem~\ref{thm:broad_result}] 
The pseudo-regret of the algorithm can be expressed as:
\begin{align}
    \sum_{t=1}^T\Exp\big[ r^t(a^{\star})-r^t(a^t)\big]\nonumber&=\sum_{t=1}^T\Exp\big[\sum_{a=1}^k p^t(a)\big(r^t(a^{\star})-r^t(a)\big)\big]\label{eq:broad_upper}\\
    &\geq \Exp\Big[\sum_{t=1}^T\sum_{a\neq a^{\star}}p^t(a) q^t \Delta \Big]=\Delta \Exp\Big[\sum_{t=1}^T q^t\big(1-p^t(a^{\star})\big)\Big]
\end{align}
where the inequality holds by noting that the difference in the means is at least $q^t\cdot \Delta$.
Note that in the adversarial scaling setting, we can bound the term $\max_a \Exp[r^t(a)] $ in Lemma \ref{lemma:weiluo2} by $q^t$ obtaining:
\begin{align}&\Exp_{a^t\in p^t}\Big[\sum_{a=1}^k p^t(a)^2 \big(\widehat{r}^t(a)-r^t(a^t)\big)^2\Big] \leq 2 q^t \cdot \Exp[1-p^t(a^{\star})] \label{eq:broad_lower}\end{align}
By Lemma~\ref{lem:weiluo_aux} as well as \eqref{eq:broad_lower} and \eqref{eq:broad_upper}, setting as potential function  $H=\Exp\Big[\sum_{t=1}^T{q^t}(1-p^t(a^{\star}))\Big]$ it holds that:
\begin{align*}
    H\Delta \leq \sqrt{(k\ln T) H}+k\ln T
\end{align*}
which leads to $H\leq \frac{K\ln T}{\Delta^2}$ and concludes the proof.
\end{proof}

\begin{remark}
Delving into Theorems~\ref{thm:scaling_result} and \ref{thm:broad_result}, the reader may wonder whether the logarithmic dependence on $T$ is necessary or whether it can be replaced by logarithmic dependence on $Q=\sum_t q^t$ at least for pseudo-regret guarantees. For BROAD, this is indeed the case with a simple change in the analysis: $\frac{\log(T)}{\Delta}$ only appears inside the analysis to bound the divergence term and can be replaced by $\frac{\log(Q)}{\Delta}$. For AAEAS, the logarithmic dependence on $T$ appears in the confidence bound used by the algorithm. By adapting this confidence bound, we can obtain regret at most $\frac{\log(1/p)}{\Delta}$ with probability $1-p$ leading to an expected regret of $\frac{\log(1/p)}{\Delta} + p\cdot Q$. If we have access to a known upper bound on $Q$, we can replace $p=\frac{1}{Q}$ and obtain $\frac{\log(Q)}{\Delta}$. At the absence of such an upper bound, it is not clear how to set the confidence bound in a way that achieves this goal.
\end{remark}
\section{Attacks against other stochastic algorithms}
\label{sec:empirical}

Besides AAE and BROAD which we previously discussed, there are few other algorithms available offering $\log T/\Delta$ type of guarantees for stochastic bandits. In this sections we discuss how those perform in adversarial scaling settings. We describe two adversarial scaling attacks: (a) small means; and (b) cold-start. Besides AAEAS and BROAD we show that the remaining alternatives  perform poorly in either of those cases. The algorithms we consider are:

\begin{compactitem}
    \item \emph {Upper Confidence Bound (UCB) \cite{Auer2002}} keeps track of the total reward $r(a)$ of each arm and the number of times $n(a)$ each arm was pulled. For each arm we compute the upper confidence bound of each arm as $\textsc{Ucb}(a) = \frac{r(a)}{n(a)} + \sqrt{\frac{\log t}{n(a)}}$ and deterministically pull the arm with largest $\textsc{Ucb}$. As we previously mentioned, every algorithm that deterministically selects an arm can be easily fooled by adversarial scaling by setting $q^t = 0$ when the algorithm is about to pull the optimal arm. We will see it is also tricked by much simpler (i.e. less adaptive) attacks.
    \item \emph {Thompson Sampling (TS) \cite{TSAgrawal}} is more easily described for the Bernoulli case where rewards are in $\{0,1\}$. The algorithm keeps a $\textsc{Beta}(n_0(a), n_1(a))$ prior for each arm $a$ initially set with $n_0(a) = n_1(a) = 1$. In each round, the algorithm takes a sample from each prior, chooses the arm $a$ with largest sampled value, observes the reward $r \in \{0,1\}$ and updates the prior by increasing $n_{r}(a)$ by $1$.
    \item \emph{EXP3++ \cite{DBLP:conf/icml/SeldinS14}:} While traditional EXP3 algorithms of \cite{auer1995gambling} don't offer $\log T/\Delta$ guarantees in stochastic settings, this modification does by introducing an exploration parameter tuned for each arm as a function of its past comparative performance. Each arm is explored with probability given by such parameter and with remaining probability a standard EXP3 algorithm is run. This algorithm retains the EXP3 guarantees in the adversarial regime.
    \item \emph{Tsallis Entropy \cite{ZimmertSeldin19a_best_of_both}:} A recent breatkthrough result provides an optimal algorithm (up to constants) for both stochastic and adversarial bandits via a standard mirror descent regularized by the Tsallis entropy. Remarkably, this algorithm requires no special tuning and no deviation from the standard mirror descent paradigm. The algorithm computes an unbiased estimator $\tilde{r}(a)$ of the reward of each arm and then samples an arm from the probability distribution in the solution of the following maximization problem:
    $$\max_{p \in \Delta} \sum_a \tilde{r}(a) \cdot p(a) + \frac{4}{\sqrt{t}} \left[ \sum_a \sqrt{p(a)} - \frac{1}{2}p(a) \right]$$
\end{compactitem}

In Figure \ref{fig:simple_case} we compare those algorithms in a purely stochastic instance with large means. As usually noted in the literature, the performance of Thompson Sampling is vastly superior than all other algorithms. On this instance, UCB, Tsallis and BROAD have similar perfomance, EXP3++ is somewhat worse followed by AAE and AAEAS which are notably worse. This is expected as they are the least adaptive. It is good to keep those in mind as we compare their performance on certain adversarial scaling scenarios.

\begin{figure}[h]
  \centering
  \includegraphics[scale=.5]{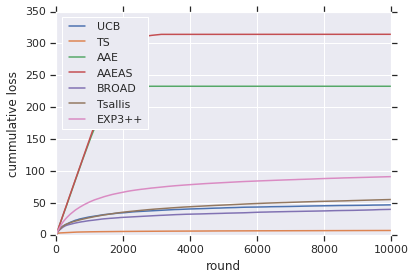}
  \caption{Comparison of different bandit algorithms for a purely stochastic instance ($q^t =1, \forall t$) with two arms with means $\mu = [0.5, 0.8]$. The cumulative in each round is the average of $100$ runs of the each algorithm.}
  \label{fig:simple_case}
\end{figure}

\subsection{Small means}\label{ssec:small_means}

Even in the absence of adversarial scaling the AAEAS and BROAD algorithm start outperforming other algorithms as the means become
smaller and smaller.  Consider a purely
stochastic instance (i.e. $q^t = 1$ for all $t$)
with only two arms with means $1 \geq \theta_1 > \theta_2 \geq 0$ and let $\Delta = \theta_1 - \theta_2$. The traditional pseudo-regret bound obtained by stochastic bandits algorihtms is $O(\log(T) / \Delta)$. The bound obtained by AAEAS and BROAD on the other hand is $O(\theta_1
\cdot \log(T) / \Delta)$, {which follows by viewing this}
purely stochastic problem as an instance with adversarial scaling with $q^t =
\theta_1$ and two arms with $\theta'_1 = 1$ and $\theta'_2 = \frac{\theta_2}{\theta_1}$. In Figure \ref{fig:small means} we compare the same algorithms in an instance with two
Bernoulli arms having means $0.005$ and $0.001$. We see that while the performance of AAEAS and BROAD is unaffected by scaling the means down, the performance of UCB, AAE, Tsallis and EXP3++ degrades despite the fact that the relative strength of both arms remains almost the same. 

\begin{figure}[h]
  \centering
  \includegraphics[scale=.5]{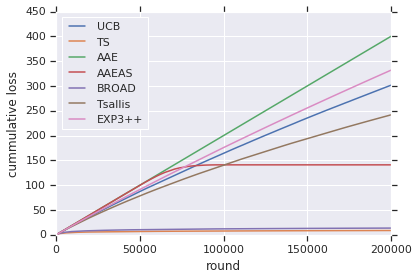}
  \caption{A purely stochastic instance ($q^t =1, \forall t$) with two arms with small means $\mu = [0.005,0.001]$. The cumulative in each round is the average of $100$ runs of the each algorithm. }
  \label{fig:small means}
\end{figure}

A direct consequence of Theorem \ref{thm:scaling_result} and Theorem \ref{thm:broad_result} is the following (we treat the number of arms $k$ as a constant in order to unify the statements for AAEAD and BROAD):

\begin{corollary}
  Given a stochastic bandit instance where rewards are supported in $[0,1]$  with means $\theta(a)$, then the performance of AAEAS and BROAD:
  $O\left( \max_a \theta_a \cdot \sum_a \frac{\log(T)}{\Delta(a)} \right)$.
\end{corollary}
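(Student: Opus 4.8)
The plan is to treat the corollary as a direct \emph{reduction} to the two main theorems: I will reinterpret the purely stochastic instance (with $q^t = 1$) as an adversarial scaling instance whose scalings are constant and equal to the largest mean, and then invoke Theorems~\ref{thm:scaling_result} and~\ref{thm:broad_result} verbatim. No new analysis of the algorithms is needed, since neither AAEAS nor BROAD ever observes $q^t$; both receive exactly the same reward samples under either interpretation, so their behavior and hence their pseudo-regret are identical across the two views.

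Concretely, write $\theta^\star = \max_a \theta(a)$. I would define a reparametrized instance by setting $q^t = \theta^\star$ for every $t$ and $\theta'(a) = \theta(a)/\theta^\star$. This satisfies the model's normalization $\max_a \theta'(a) = 1$, keeps every $\theta'(a)\in[0,1]$, and most importantly leaves every per-round mean unchanged: $q^t\cdot\theta'(a) = \theta^\star\cdot \frac{\theta(a)}{\theta^\star} = \theta(a)$. Thus the reward distributions $\mathcal{F}^t(a)$ -- still supported in $[0,1]$ -- are literally the same, and the two instances are indistinguishable to any algorithm. The first small computation is that the gaps transform as $\Delta'(a) = 1 - \theta'(a) = \frac{\theta^\star - \theta(a)}{\theta^\star} = \frac{\Delta(a)}{\theta^\star}$. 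One also checks the pseudo-regret notions coincide: in the scaled view the regret is $\sum_t q^t \Delta'(a^t) = \theta^\star \sum_t \frac{\Delta(a^t)}{\theta^\star} = \sum_t \Delta(a^t)$, which is exactly the stochastic pseudo-regret.

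With this in hand the two bounds drop out by substitution. For AAEAS, Theorem~\ref{thm:scaling_result} gives pseudo-regret $O\big(\sum_{a\neq a^\star} \frac{k\log(kT)}{\Delta'(a)}\big) = O\big(\theta^\star \sum_{a\neq a^\star}\frac{k\log(kT)}{\Delta(a)}\big)$; treating $k$ as a constant (so that $\log(kT) = O(\log T)$) this is precisely $O\big(\max_a\theta_a \cdot \sum_a \frac{\log T}{\Delta(a)}\big)$. For BROAD, Theorem~\ref{thm:broad_result} gives $O\big(\frac{k\log T}{\Delta'}\big) = O\big(\theta^\star\frac{k\log T}{\Delta}\big)$, where $\Delta$ is the smallest nonzero gap; since $\frac{1}{\Delta}$ is exactly one of the summands $\frac{1}{\Delta(a)}$ appearing in $\sum_a \frac{1}{\Delta(a)}$, this single-gap bound is dominated (up to the constant factor $k$) by the full sum, again matching the stated form.

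There is essentially no hard step here: the entire content is the observation that the scaling factor $q^t$ can be folded into the means, and the only quantitative care is tracking the factor $\theta^\star$ introduced by renormalizing the gaps. The one point worth stating explicitly is the legitimacy of the reinterpretation -- that the algorithms never use $q^t$ and see identical samples -- since this is what licenses applying theorems proved in the adversarial-scaling model to a plain stochastic instance.
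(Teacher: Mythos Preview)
Your proposal is correct and follows exactly the approach the paper takes: the paper also reinterprets the purely stochastic instance as an adversarial scaling instance with $q^t=\max_a\theta(a)$ and renormalized intrinsic means $\theta'(a)=\theta(a)/\max_{a'}\theta(a')$, then invokes Theorems~\ref{thm:scaling_result} and~\ref{thm:broad_result} directly (treating $k$ as a constant). Your write-up is in fact more explicit than the paper's, which merely states the corollary as a ``direct consequence'' after sketching the reparametrization.
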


Interestingly, the performance of Thompson Sampling seems unaffected by adversarial scaling. It is not clear to us how to generalize the proof of \cite{TSAgrawal} to explain the good performance of Thompson Sampling in the small mean regimes. We leave understanding this phenomenon as an open problem.

\subsection{Cold start attack}\label{ssec:cold_start}

\begin{figure}[h]
  \centering
  \includegraphics[scale=.5]{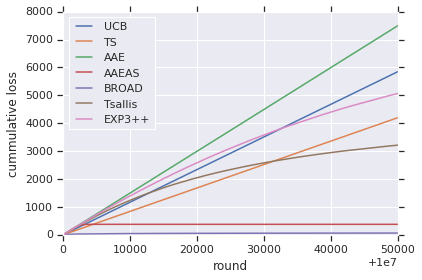}
  \caption{Comparison of different bandit algorithms on an extreme cold start instance: $q^t = 0$ for $t < t_0 = 10^7$ and $q^t = 1$ afterwards. Arm means are $\mu = [0.5, 0.8]$. We only plot the rounds after $t_0$ since the loss up to that point is zero. }
  \label{fig:cold_start}
\end{figure}

A very effective type of attack against randomized
algorithms (even Thompson Sampling) is the \emph{cold start} attack, where for the first $t_0$ periods,
the adversary chooses $q^t = 0$ giving the algorithm the impression it has pulled a lot of arms although there are effectively no pulls. In Figure \ref{fig:cold_start} we exhibit an extreme form of this attack, where we have a very long cold-start period ($t_0 = 10^7$) and from then on we have a standard stochastic instance with means $0.5$ and $0.8$. The performance of all algorithms except AAEAS and BROAD is severely hurt. In fact this performance degradation can become arbitrarly bad as $t_0 \rightarrow \infty$. This happens for different reasons depending on the type of algorithm:

\begin{compactitem}
    \item For confidence-bound based algorithms like UCB and AAE, the cold start attack produces the impression that the arms have much smaller mean than they actually have. Since the confidence bounds scale with the inverse square root of the empirical gap, the exploration phase can be arbitrarly extended as $t_0$ grows to infinity.
    \item Tsallis and EXP3++ are mirror-descent based algorithms that have a learning rate schedule that depends directly on the number of rounds ($\eta_t = 1/\sqrt{t}$ in either case). For those the cold-start will cause the learning rate to start at $1/\sqrt{t_0}$ which is much smaller than the learning rate that would be required for a stochastic instance with larger means. Note that while BROAD is also based on mirror descent, the learning rate is adaptively tuned based on the rewards and it not directly depending on the numbers of rounds. This ability of adaptive tuning also enables first-order bounds for BROAD \cite{FosterLiLySrTa16} and stems from the strong log-barrier regularizer that effectively deals with the variance in the second-order term. In contrast, Tsallis and EXP3++ have weaker regularizers and do not admit first-order bounds; for the same reason, we believe that it is unlikely that their learning rate can be appropriately tuned to circumvent this issue. 
    \item Thompson Sampling is based on keeping a prior on the means of the arms. A large number of cold-start periods leads the algorithm to a state with very skewed priors. Once we reach the end of the cold-start period at $t_0$ each arm will have a beta distribution around $\textsc{Beta}(t_0/k,1)$ instead of  $\textsc{Beta}(1,1)$.
\end{compactitem}

\begin{figure}[h]
  \centering
  \includegraphics[scale=.5]{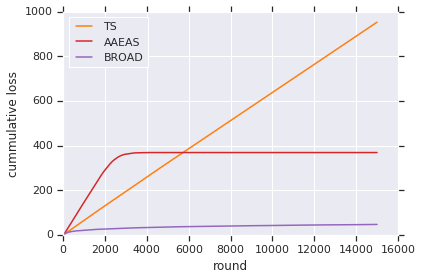}
  \caption{Comparing AAEAS with Thompson Sampling on a cold start instance ($t_0
  = 25 $) with means $\mu=[0.5, 0.8]$. The $y$-axis is the cummulative loss of the algorithm averaged over $100$ runs.}
  \label{fig:ts_comparison}
\end{figure}

In Figure \ref{fig:ts_comparison} we compare Thompson Sampling and AAEAS in an instance with a very small number of cold start rounds ($t_0 = 25$) and then run the algorithm for another $T=30000$ rounds. 
Interestingly, we see that, even when the cold start is really small, the effect of the attack is long-lasting, which provides powerful evidence that the
phenomenon described is, in fact, an actual concern when deploying Thompson Sampling. We also note that this phenomenon does not even need to be caused due
to an adversarial source. It could well occur that the initial samples are less
effective because, e.g., initially the advertised product is not yet well
established which leads customers to prefer alternative options. In fact, recent
work has suggested that, under competition, learning algorithms generally suffer from such an effect due to the exploration they need to perform in
the beginning; see \cite{MansourSlWu18innovation} and \cite{AridorLiSlWu19} and for a
relevant discussion.

\subsection{Discussion on algorithms not run}
Finally, we would like to mention that there are a few other algorithms with logarithmic stochastic guarantees that we decided to not include. The original best of both worlds algorithm SAO \cite{BubeckS12} and its follow-up SAPO \cite{auer16} are rooted in AAE and switch to EXP3 if some test fails. Similarly, multi-layer AAE \cite{lykouris2018stochastic} provides a way to robsutify AAE to adversarial corruptions. Since the instances are mostly stochastic and therefore the tests are not expected to fail, their performance is strictly inferior to the one of AAE. Another line of work focuses on stochastic algorithms from the EXP3 family that can effectively select non-stationarities in the environment, e.g. the R.EXP3 algorithm \cite{Besbes2014}.  These algorithms again come with horizon-dependent learning rates and are expected to suffer similarly with Tsallis and EXP3++ (that belong to the same family). Finally, for the same reason, we did not run the recent HYBRID algorithm \cite{ZimmertLuoWei19} which extends upon Tsallis but again uses horizon-dependent learning rates.
\section{Conclusion}\label{sec:conclusions}
In this work, we suggest a new intermediary model between stochastic and adversarial bandits where an adversary can rescale all rewards in a given round by the same factor. We show that two adaptive algorithms are robust to this adversarial scaling and provide two natural attacks that demonstrate that other stochastic algorithms are not. 

There are two nice open questions coming from our work. AAEAS is weaker by a factor of $k$ compared to BROAD; we believe that our analysis is tight in that matter but it would be interesting to see if some alternative modification of AAE can remove this dependence. Moreover, Thompson Sampling, although ineffective for the \emph{cold start} attack, has very good performance in the \emph{small means} attack; this suggests that its analysis could become more tight to scale with the ratio of the means (rather than their difference).

\bibliographystyle{alpha}
\bibliography{bandits}

\newcommand{\etalchar}[1]{$^{#1}$}
\begin{thebibliography}{ACBFS95}

\bibitem[AAGO06]{AllenbergAuGyOt06}
Chamy Allenberg, Peter Auer, L{\'a}szl{\'o} Gy{\"o}rfi, and Gy{\"o}rgy
  Ottucs{\'a}k.
\newblock Hannan consistency in on-line learning in case of unbounded losses
  under partial monitoring.
\newblock In {\em Algorithmic Learning Theory (ALT)}, 2006.

\bibitem[AC16]{auer16}
Peter Auer and Chao-Kai Chiang.
\newblock An algorithm with nearly optimal pseudo-regret for both stochastic
  and adversarial bandits.
\newblock In {\em Proceedings of the 29th Annual Conference on Learning Theory
  (COLT)}, 2016.

\bibitem[ACBF02]{Auer2002}
Peter Auer, Nicol\`{o} Cesa-Bianchi, and Paul Fischer.
\newblock Finite-time analysis of the multiarmed bandit problem.
\newblock {\em Mach. Learn.}, 47(2-3):235--256, May 2002.

\bibitem[ACBFS95]{auer1995gambling}
Peter Auer, Nicolo Cesa-Bianchi, Yoav Freund, and Robert~E Schapire.
\newblock Gambling in a rigged casino: The adversarial multi-armed bandit
  problem.
\newblock In {\em Proceedings of IEEE 36th Annual Foundations of Computer
  Science}, pages 322--331. IEEE, 1995.

\bibitem[AG17]{TSAgrawal}
Shipra Agrawal and Navin Goyal.
\newblock Near-optimal regret bounds for thompson sampling.
\newblock {\em J. ACM}, 64(5):30:1--30:24, September 2017.

\bibitem[ALSW19]{AridorLiSlWu19}
Guy Aridor, Kevin Liu, Aleksandrs Slivkins, and Zhiwei~Steven Wu.
\newblock Competing bandits: The perils of exploration under competition.
\newblock In {\em 20th ACM Conference on Economics and Computation (EC)}, 2019.

\bibitem[AZBL18]{AllenzhuBubLi_myma}
Zeyuan Allen-Zhu, S{\'e}bastien Bubeck, and Yuanzhi Li.
\newblock Make the minority great again: First-order regret bound for
  contextual bandits.
\newblock In {\em Proceedings of the 35th International Conference on Machine
  Learning (ICML)}, 2018.

\bibitem[BGZ14]{Besbes2014}
Omar Besbes, Yonatan Gur, and Assaf Zeevi.
\newblock Stochastic multi-armed-bandit problem with non-stationary rewards.
\newblock In {\em Advances in Neural Information Processing Systems 27}. 2014.

\bibitem[BLLW19]{bubeck2019improved}
S{\'e}bastien Bubeck, Yuanzhi Li, Haipeng Luo, and Chen-Yu Wei.
\newblock Improved path-length regret bounds for bandits.
\newblock {\em arXiv preprint arXiv:1901.10604}, 2019.

\bibitem[BS12]{BubeckS12}
Sébastien Bubeck and Aleksandrs Slivkins.
\newblock The best of both worlds: Stochastic and adversarial bandits.
\newblock In {\em Proceedings of the 25th Annual Conference on Learning Theory
  (COLT)}, 2012.

\bibitem[CBS18]{cesa2018bandit}
Nicol{\`o} Cesa-Bianchi and Ohad Shamir.
\newblock Bandit regret scaling with the effective loss range.
\newblock In {\em Algorithmic Learning Theory}, pages 128--151, 2018.

\bibitem[EMM06]{Even-DarManMan06}
Eyal Even{-}Dar, Shie Mannor, and Yishay Mansour.
\newblock Action elimination and stopping conditions for the multi-armed bandit
  and reinforcement learning problems.
\newblock {\em Journal of Machine Learning Research}, 7:1079--1105, 2006.

\bibitem[FLL{\etalchar{+}}16]{FosterLiLySrTa16}
Dylan~J Foster, Zhiyuan Li, Thodoris Lykouris, Karthik Sridharan, and Eva
  Tardos.
\newblock Learning in games: Robustness of fast convergence.
\newblock In {\em Advances in Neural Information Processing Systems (NIPS)},
  2016.

\bibitem[GKT19]{gupta2019better}
Anupam Gupta, Tomer Koren, and Kunal Talwar.
\newblock Better algorithms for stochastic bandits with adversarial
  corruptions.
\newblock {\em arXiv preprint arXiv:1902.08647}, 2019.

\bibitem[HK09]{Hazan:2009:BAB:1496770.1496775}
Elad Hazan and Satyen Kale.
\newblock Better algorithms for benign bandits.
\newblock In {\em Proceedings of the 20th Annual ACM-SIAM Symposium on Discrete
  Algorithms (SODA)}, 2009.

\bibitem[JLMZ18]{jun2018adversarial}
Kwang-Sung Jun, Lihong Li, Yuzhe Ma, and Jerry Zhu.
\newblock Adversarial attacks on stochastic bandits.
\newblock In {\em Advances in Neural Information Processing Systems}, pages
  3640--3649, 2018.

\bibitem[KKS{\etalchar{+}}16]{katariya2016stochastic}
Sumeet Katariya, Branislav Kveton, Csaba Szepesvari, Claire Vernade, and Zheng
  Wen.
\newblock Stochastic rank-1 bandits.
\newblock {\em arXiv preprint arXiv:1608.03023}, 2016.

\bibitem[KWS18]{krishnamurthy2018semiparametric}
Akshay Krishnamurthy, Zhiwei~Steven Wu, and Vasilis Syrgkanis.
\newblock Semiparametric contextual bandits.
\newblock In {\em International Conference on Machine Learning}, pages
  2781--2790, 2018.

\bibitem[LMPL18]{lykouris2018stochastic}
Thodoris Lykouris, Vahab Mirrokni, and Renato Paes~Leme.
\newblock Stochastic bandits robust to adversarial corruptions.
\newblock In {\em Proceedings of the 50th Annual ACM SIGACT Symposium on Theory
  of Computing}, pages 114--122. ACM, 2018.

\bibitem[LS19]{LiuShroff19}
Fang Liu and Ness~B. Shroff.
\newblock Data poisoning attacks on stochastic bandits.
\newblock In {\em International Conference on Machine Learning (ICML)}, volume
  abs/1905.06494, 2019.

\bibitem[LST18]{LykourisSrTa18}
Thodoris Lykouris, Karthik Sridharan, and \'Eva Tardos.
\newblock Small-loss bounds for online learning with partial information.
\newblock In {\em Proceedings of the 31st Annual Conference on Learning Theory
  (COLT)}, 2018.

\bibitem[MSW18]{MansourSlWu18innovation}
Yishay Mansour, Aleksandrs Slivkins, and Zhiwei~Steven Wu.
\newblock Competing bandits: Learning under competition.
\newblock In {\em 9th Innovations in Theoretical Computer Science Conference
  (ITCS)}, 2018.

\bibitem[Neu15]{Neu2015first}
Gergely Neu.
\newblock First-order regret bounds for combinatorial semi-bandits.
\newblock In {\em Conference on Learning Theory (COLT)}, 2015.

\bibitem[SL17]{DBLP:conf/colt/SeldinL17}
Yevgeny Seldin and G{\'{a}}bor Lugosi.
\newblock An improved parametrization and analysis of the {EXP3++} algorithm
  for stochastic and adversarial bandits.
\newblock In {\em Proceedings of the 30th Conference on Learning Theory
  (COLT)}, 2017.

\bibitem[SS14]{DBLP:conf/icml/SeldinS14}
Yevgeny Seldin and Aleksandrs Slivkins.
\newblock One practical algorithm for both stochastic and adversarial bandits.
\newblock In {\em Proceedings of the 31th International Conference on Machine
  Learning (ICML)}, 2014.

\bibitem[SS17]{DBLP:conf/icml/ShamirS17}
Ohad Shamir and Liran Szlak.
\newblock Online learning with local permutations and delayed feedback.
\newblock In {\em Proceedings of the 34th International Conference on Machine
  Learning (ICML)}, 2017.

\bibitem[TR15]{traca2015regulating}
Stefano Trac{\`a} and Cynthia Rudin.
\newblock Regulating greed over time.
\newblock {\em arXiv preprint arXiv:1505.05629}, 2015.

\bibitem[WL18]{wei2018more}
Chen-Yu Wei and Haipeng Luo.
\newblock More adaptive algorithms for adversarial bandits.
\newblock In {\em Conference On Learning Theory}, pages 1263--1291, 2018.

\bibitem[ZLW19]{ZimmertLuoWei19}
Julian Zimmert, Haipeng Luo, and Chen{-}Yu Wei.
\newblock Beating stochastic and adversarial semi-bandits optimally and
  simultaneously.
\newblock In {\em Proceedings of the 36th International Conference on Machine
  Learning (ICML)}, 2019.

\bibitem[ZS18]{zimmert2018factored}
Julian Zimmert and Yevgeny Seldin.
\newblock Factored bandits.
\newblock In {\em Advances in Neural Information Processing Systems}, pages
  2835--2844, 2018.

\bibitem[ZS19]{ZimmertSeldin19a_best_of_both}
Julian Zimmert and Yevgeny Seldin.
\newblock An optimal algorithm for stochastic and adversarial bandits.
\newblock In Kamalika Chaudhuri and Masashi Sugiyama, editors, {\em Proceedings
  of Machine Learning Research}, volume~89 of {\em Proceedings of Machine
  Learning Research}, pages 467--475. PMLR, 16--18 Apr 2019.

\end{thebibliography}

\end{document}